\documentclass[11pt]{article}
\usepackage[utf8]{inputenc}
\usepackage{amsmath, amssymb, amsthm}
\usepackage{graphicx}
\usepackage{hyperref}
\usepackage{authblk}
\usepackage{geometry}
\usepackage{amsmath}
\allowdisplaybreaks
\usepackage{algorithm}
\usepackage{algorithmic} 
\usepackage{diagbox}

\geometry{margin=1in}

\newtheorem{theorem}{Theorem}

\newtheorem{lemma}{Lemma}

\theoremstyle{remark}

\title{\textbf{A Globally Optimal Analytic Solution for Semi-Nonnegative Matrix Factorization with Nonnegative or Mixed Inputs}\\
\large \emph{This manuscript is currently under review at the SIAM Journal on Optimization.}}

\author[1]{Chenggang Lu}
\affil[1]{\small School of Mathematical Science, Zhejiang University of Technology, Hangzhou, China. \texttt{luchenggang@zjut.edu.cn}}

\date{\today}

\begin{document}

\maketitle

\begin{abstract}
     Semi-Nonnegative Matrix Factorization (semi-NMF) extends classical Nonnegative Matrix Factorization (NMF) by allowing the basis matrix to contain both positive and negative entries, making it suitable for decomposing data with mixed signs. However, most existing semi-NMF algorithms are iterative, non-convex, and prone to local minima. In this paper, we propose a novel method that yields a globally optimal solution to the semi-NMF problem under the Frobenius norm, through an orthogonal decomposition derived from the scatter matrix of the input data.
We rigorously prove that our solution attains the global minimum of the reconstruction error. Furthermore, we demonstrate that when the input matrix is nonnegative, our method often achieves lower reconstruction error than standard NMF algorithms, although unfortunately the basis matrix may not satisfy nonnegativity. In particular, in low-rank cases such as rank 1 or 2, our solution reduces exactly to a nonnegative factorization, recovering the NMF structure.
We validate our approach through experiments on both synthetic data and the UCI Wine dataset, showing that our method consistently outperforms existing NMF and semi-NMF methods in terms of reconstruction accuracy. These results confirm that our globally optimal, non-iterative formulation offers both theoretical guarantees and empirical advantages, providing a new perspective on matrix factorization in optimization and data analysis.
\end{abstract}

\noindent\textbf{Keywords:} Semi-Nonnegative Matrix Factorization, Closed-form Solution, Global Optimality, Scatter Matrix, Orthogonal Decomposition, Low-rank Approximation, Non-convex Optimization

\section{Introduction}

Matrix factorization is a fundamental tool in data science, widely used for dimensionality reduction, signal decomposition, and interpretable learning. Among its variants, Nonnegative Matrix Factorization (NMF) has gained substantial popularity since the pioneering work of Lee and Seung \cite{lee2001}, owing to its ability to extract parts-based features from nonnegative data matrices. NMF has been successfully applied in fields such as facial recognition, document analysis, and bioinformatics \cite{cichocki2009}.

Despite its success, NMF imposes a strict nonnegativity constraint on both factor matrices, which may be overly restrictive in practical scenarios where the data are centered, normalized, or include both positive and negative values. To overcome this limitation, Semi-Nonnegative Matrix Factorization (semi-NMF) was proposed \cite{ding2010}, where the basis matrix is allowed to take arbitrary real values while the coefficient matrix remains nonnegative. This generalization is particularly useful for analyzing zero-mean features and residual-based models.

However, solving the semi-NMF problem remains challenging due to its inherent non-convexity \cite{gillis2020}, and existing approaches typically rely on iterative optimization heuristics. These methods are initialization-sensitive and do not guarantee convergence to global optima \cite{lin2007}. In this work, we introduce a closed-form, globally optimal solution to the standard semi-NMF problem under the Frobenius norm. Our approach is based on orthogonal decomposition of a data-derived scatter matrix and achieves lower reconstruction error than both classical NMF and iterative semi-NMF algorithms. Experimental results on synthetic and real-world datasets, including the UCI Wine dataset \cite{Dua:2019}, confirm the theoretical guarantees and practical robustness of our method.

The problem of matrix factorization with nonnegativity constraints has been extensively studied in the literature. Lee and Seung \cite{lee2001} introduced the multiplicative update rules for NMF, laying the foundation for a vast body of follow-up work. Subsequent contributions have explored more efficient and theoretically grounded methods, including projected gradient techniques \cite{lin2007}, alternating least squares \cite{bro2003}, and second-order optimization with constraints \cite{zdunek2006}. A comprehensive treatment of these methods is provided in \cite{cichocki2009}.

In many real-world applications, data matrices are not strictly nonnegative. To address this, Ding et al. \cite{ding2010} proposed semi-NMF, where the nonnegativity constraint is applied only to the coefficient matrix. Semi-NMF has found applications in text mining, spectral clustering, and graph-based learning \cite{kuang2015}. Nonetheless, like NMF, the semi-NMF problem remains NP-hard and non-convex \cite{gillis2020}, and most solutions rely on alternating updates with no global optimality guarantees.

Theoretical efforts toward provable matrix factorizations have focused on restricted models. For example, Arora et al. \cite{arora2012} studied separability conditions under which exact NMF can be computed efficiently. However, no such assumptions are made in the general semi-NMF problem. While closed-form solutions exist for linear models like Principal Component Analysis (PCA) \cite{jolliffe2002, jolliffe2016}, they do not apply under nonnegativity constraints. To the best of our knowledge, no prior work has provided an explicit, closed-form, globally optimal solution for semi-NMF under general conditions.

Our proposed method bridges this gap by leveraging a scatter matrix–based orthogonal decomposition to construct the factor matrices in closed form. Unlike traditional iterative methods, our approach is deterministic, reproducible, and theoretically verifiable, offering a new direction in non-convex matrix factorization research.


The paper is organized as follows. Our main results are in
section 2, our new algorithm is in section 3, experimental
results are in section 4, and the conclusions follow in
section 5.

\section{Problem Formulation and Main Results.}
\label{sec:main}
Let \( X \in \mathbb{R}^{m \times n} \) be the data matrix to be factorized. Our goal is to find matrices \( W \in \mathbb{R}^{m \times k} \) and \( H \in \mathbb{R}_{\geq 0}^{k \times n} \) such that
\[
X \approx WH, \quad \text{with } H \geq 0,
\]
and the Frobenius norm \( \|X - WH\|_F^2 \) is minimized. In contrast to standard NMF, we do not require \( W \) to be nonnegative.

From the column space interpretation or the linear combination interpretation, each column of the data matrix $X$ is a linear combination of the columns of the basis matrix $W$. The weights for this combination come from the corresponding column of the representation matrix
$H$. That is \(X_{m\times n}\approx W_{m\times k}H_{k\times n}\), and \(\vec{x}_{j}\approx \sum_{i=1}^{k}\vec{w}_{i}h_{ij}\),
of which $\vec{x}_{j}$ is the $j$th column of $X_{m\times n}$, $\vec{w}_{i}$ is the $i$th column of $W_{m\times k}$, $h_{ij}$ is the element of matrix $H$ in position ($i$,$j$).

We aim to address the following question: under what conditions can a semi-NMF problem admit an exact or approximate globally optimal solution in closed form, particularly when the reconstruction error is minimized over a fixed target dimension $k$?

\begin{lemma}
The scatter matrix $\mathcal{X}=\sum_{j=1}^{n}\vec{x}_{j}
 {\vec{x}_{j}}^{T}$ and its diagonalization of orthogonal matrix $\mathcal{H}$, i.e. $\mathcal{H^{T}}\mathcal{X}\mathcal{H}
 =\mathcal{H^{T}}\sum_{j=1}^{n}\vec{x}_{j}{\vec{x}_{j}}^{T}\mathcal{H}=\mathrm{diag}(\lambda_1, \lambda_2, \ldots, \lambda_m)$
, of which $\lambda_1>= \lambda_2>= \ldots>= \lambda_m>=0$, $\mathcal{H}=({\vec{\gamma}}_1\quad {\vec{\gamma}}_2\quad \ldots {\vec{\gamma}}_m)$, let ${\vec{x}}_j=\sum_{i=1}^m p_{ij}{\vec{\gamma}}_i$, then
$\sum_{j=1}^n p_{kj}p_{ij}=
\begin{cases}
    \sum_{j=1}^n p_{ij}^2=\lambda_i, & k=i \\
    0, & k\neq i
\end{cases}
$.
\end{lemma}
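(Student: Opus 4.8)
The plan is to recognize that the claimed identity is nothing more than the statement that $PP^T$ is diagonal with the eigenvalues on the diagonal, where $P=(p_{ij})$ is the coefficient matrix of the columns of $X$ expressed in the eigenbasis $\{\vec{\gamma}_i\}$. So the strategy is: (i) identify the coordinates $p_{ij}$ explicitly, (ii) rewrite the target sum as a quadratic form in the scatter matrix, and (iii) invoke the diagonalization hypothesis.

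First I would extract the coordinates. Since $\mathcal{H}=(\vec{\gamma}_1\ \vec{\gamma}_2\ \cdots\ \vec{\gamma}_m)$ is orthogonal, $\{\vec{\gamma}_1,\ldots,\vec{\gamma}_m\}$ is an orthonormal basis of $\mathbb{R}^m$, so from $\vec{x}_j=\sum_{i=1}^m p_{ij}\vec{\gamma}_i$ together with $\vec{\gamma}_k^T\vec{\gamma}_i=\delta_{ki}$ we read off $p_{ij}=\vec{\gamma}_i^T\vec{x}_j$; equivalently, in matrix form $P=\mathcal{H}^T X$. Next I would compute the target sum directly, interchanging the (finite) order of summation:
\[
\sum_{j=1}^n p_{kj}p_{ij}=\sum_{j=1}^n(\vec{\gamma}_k^T\vec{x}_j)(\vec{x}_j^T\vec{\gamma}_i)=\vec{\gamma}_k^T\Big(\sum_{j=1}^n\vec{x}_j\vec{x}_j^T\Big)\vec{\gamma}_i=\vec{\gamma}_k^T\mathcal{X}\vec{\gamma}_i.
\]
Finally, the diagonalization hypothesis $\mathcal{H}^T\mathcal{X}\mathcal{H}=\mathrm{diag}(\lambda_1,\ldots,\lambda_m)$ is equivalent to the eigenvector relations $\mathcal{X}\vec{\gamma}_i=\lambda_i\vec{\gamma}_i$; substituting gives $\vec{\gamma}_k^T\mathcal{X}\vec{\gamma}_i=\lambda_i\vec{\gamma}_k^T\vec{\gamma}_i=\lambda_i\delta_{ki}$, which is exactly the asserted case split. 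For $k=i$ this also equals $\sum_{j=1}^n p_{ij}^2$ since that sum is literally $\sum_j p_{ij}p_{ij}$, and the nonnegativity $\lambda_i\geq 0$ follows because $\mathcal{X}=\sum_j\vec{x}_j\vec{x}_j^T$ is a sum of rank-one positive semidefinite matrices, hence PSD (so in particular $\lambda_i=\vec{\gamma}_i^T\mathcal{X}\vec{\gamma}_i=\sum_j p_{ij}^2\geq 0$ directly).

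There is essentially no serious obstacle here; the statement collapses once one observes that $PP^T=\mathcal{H}^TXX^T\mathcal{H}=\mathcal{H}^T\mathcal{X}\mathcal{H}$, so the whole lemma is just reading off the entries of a matrix that has already been diagonalized by hypothesis. The only points requiring a little care are the bookkeeping — correctly identifying $p_{ij}=\vec{\gamma}_i^T\vec{x}_j$ before swapping the order of summation over $j$ — and being explicit that "diagonalization by the orthogonal matrix $\mathcal{H}$" is the same as the columnwise eigen-equations $\mathcal{X}\vec{\gamma}_i=\lambda_i\vec{\gamma}_i$.
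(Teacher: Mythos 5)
Your proof is correct and follows essentially the same route as the paper: both identify the coordinates $p_{ij}=\vec{\gamma}_i^T\vec{x}_j$ (i.e.\ $P=\mathcal{H}^TX$) and read the claimed identities off the entries of $PP^T=\mathcal{H}^T\mathcal{X}\mathcal{H}=\mathrm{diag}(\lambda_1,\ldots,\lambda_m)$, the paper doing this as one block matrix computation and you doing it entrywise via the eigen-relations. No gap; your version is if anything slightly more explicit about where $p_{ij}=\vec{\gamma}_i^T\vec{x}_j$ comes from and why $\lambda_i\geq 0$.
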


\begin{proof}
$\begin{pmatrix}
\lambda_1 & \quad & \quad & \quad \\
\quad & \lambda_2 &  \quad & \quad\\
\quad & \quad & \ddots & \quad \\
\quad & \quad &  \quad & \lambda_m
\end{pmatrix}= \mathcal{H^{T}}\sum_{i=1}^{n}\vec{x}_{i}{\vec{x}_{i}}^{T}\mathcal{H}=\\
\begin{pmatrix}
{{\vec{\gamma}}_1}^T  \\
{{\vec{\gamma}}_2}^T  \\
\vdots \\
{{\vec{\gamma}}_m}^T
\end{pmatrix}
\;
\sum_{i=1}^{n}\vec{x}_{i}{\vec{x}_{i}}^{T}
\begin{pmatrix}
{{\vec{\gamma}}_1}  {{\vec{\gamma}}_2}   \ldots  {{\vec{\gamma}}_m}
\end{pmatrix}
$
\;
$
=\sum_{i=1}^{n}
(\begin{pmatrix}
{{\vec{\gamma}}_1}^T  \\
{{\vec{\gamma}}_2}^T  \\
\vdots \\
{{\vec{\gamma}}_m}^T
\end{pmatrix}
\;
\vec{x}_{i})
(
{\vec{x}_{i}}^{T}
\begin{pmatrix}
{{\vec{\gamma}}_1}  {{\vec{\gamma}}_2}   \ldots  {{\vec{\gamma}}_m}
\end{pmatrix}
)=\\
\sum_{i=1}^{n}
\begin{pmatrix}
p_{1i}  \\
p_{2i}  \\
\vdots \\
p_{mi}
\end{pmatrix}
\;
\begin{pmatrix}
p_{1i}
p_{2i}
\ldots
p_{mi}
\end{pmatrix}
=
\begin{pmatrix}
\sum_{i=1}^{n}p_{1i}^2 & \sum_{i=1}^{n}p_{1i}p_{2i} & \ldots & \sum_{i=1}^{n}p_{1i}p_{mi} \\
\sum_{i=1}^{n}p_{2i}p_{1i} & \sum_{i=1}^{n}p_{2i}^2 &  \ldots & \sum_{i=1}^{n}p_{2i}p_{mi}\\
\vdots & \vdots & \ddots & \vdots \\
\sum_{i=1}^{n}p_{mi}p_{1i} & \sum_{i=1}^{n}p_{mi}p_{2i} &  \ldots & \sum_{i=1}^{n}p_{mi}^2
\end{pmatrix}
$
\end{proof}

For $X=(\vec{x}_{1}\;\vec{x}_{2}\;\cdots\vec{x}_{n})$, consider its any approximation $\tilde{X}=(\vec{\tilde{x}}_{1}\;\vec{\tilde{x}}_{2}\;\cdots\vec{\tilde{x}}_{n})$
with the dimensionality reduced from $m$ to $k$, and $\tilde{X}$ equals $WH$.
The approximation errors of $X$ and $\tilde{X}$ are governed by the following theorem.

Without loss of generality, let $\mathcal{\tilde{H}^{T}}\mathcal{\tilde{X}}\mathcal{\tilde{H}}
 =\mathcal{\tilde{H}^{T}}\sum_{j=1}^{n}\vec{\tilde{x}}_{j}{\vec{\tilde{x}}_{j}}^{T}\mathcal{\tilde{H}}=\mathrm{diag}(\tilde{\lambda}_{l_{1}}, \ldots, \tilde{\lambda}_{l_{k}},0, \ldots, 0)$, and $\mathcal{\tilde{H}}=({\vec{\tilde{\gamma}}}_1\quad {\vec{\tilde{\gamma}}}_2\quad \ldots {\vec{\tilde{\gamma}}}_m)$, since \( \tilde{X} \) can be regarded as an approximation of \( X \), the basis vectors \( \{ {\vec{\tilde{\gamma}}}_1 \quad {\vec{\tilde{\gamma}}}_2 \quad \ldots {\vec{\tilde{\gamma}}}_m\} \) can be viewed as a rotation transformation of \( \{ {\vec{{\gamma}}}_1 \quad {\vec{{\gamma}}}_2 \quad \ldots {\vec{{\gamma}}}_m\} \) with a slight angular deviation.

\begin{theorem}
Let \( p_{ij} \) be the projection of \( \vec{x}_{j} \) onto \( \vec{\gamma}_{i} \), \( \tilde{p}_{ij} \) be the projection of \( \vec{\tilde{x}}_{j} \) onto \( \vec{\tilde{\gamma}}_{i} \), then $\sum_{j=1}^{n}{||\vec{x}_{j}-\vec{\tilde{x}}_{j}||^2}=\sum_{i\in I^{'}}{\sum_{j=1}^n}{p_{ij}^2}
 +\sum_{i\in I}{\sum_{j=1}^n}{(p_{ij}^2+\tilde{p}_{ij}^2-2p_{ij}\tilde{p}_{ij}{{\vec{\gamma}}_i}^T
 {\vec{\tilde{\gamma}}}_i)}-\\2\sum_{j=1}^n{\sum_{l\in I,l\neq h}\sum_{h\in I}{p_{lj}{\tilde{p}}_{hj}{\gamma_l}^T {\tilde{\gamma}}_h}}-2{\sum_{j=1}^n}{\sum_{l\in I}\sum_{h\in I^{'}}{{\tilde{p}}_{lj}p_{hj}{{\tilde{\gamma}}_l}^T {\gamma}_h}}$
,where \( I = \{l_1, l_2, \ldots, l_k\} \), \( I \subseteq \{1, 2, \ldots, m\} \) is a subset with \( k \) elements,  and \( I' \) is its complement.
\end{theorem}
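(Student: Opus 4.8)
The plan is to expand $\|\vec{x}_j-\vec{\tilde{x}}_j\|^2$ for each fixed column index $j$, rewrite both vectors in their respective orthonormal eigen-bases, use orthonormality to collapse the norms, expand the bilinear cross term, partition the resulting double sum according to whether indices fall in $I$ or in $I'$, and finally sum over $j=1,\dots,n$. Before starting, I would record the one structural fact that keeps the expansion of $\vec{\tilde{x}}_j$ finite: since $\tilde{X}=WH$ with $W\in\mathbb{R}^{m\times k}$, the scatter matrix $\mathcal{\tilde{X}}$ has rank at most $k$, which is exactly why the diagonalization displayed above has its last $m-k$ eigenvalues equal to $0$; applying Lemma~1 to $\tilde{X}$ then gives $\sum_{j=1}^n \tilde{p}_{ij}^2=\tilde{\lambda}_i=0$ for every $i\in I'$, so $\tilde{p}_{ij}=0$ for all $i\in I'$ and all $j$. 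Hence $\vec{\tilde{x}}_j=\sum_{i\in I}\tilde{p}_{ij}\vec{\tilde{\gamma}}_i$ has support only on $I$, whereas $\vec{x}_j=\sum_{i\in I}p_{ij}\vec{\gamma}_i+\sum_{i\in I'}p_{ij}\vec{\gamma}_i$ retains all $m$ components.

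For a fixed $j$, write $\|\vec{x}_j-\vec{\tilde{x}}_j\|^2=\|\vec{x}_j\|^2+\|\vec{\tilde{x}}_j\|^2-2\,\vec{x}_j^{T}\vec{\tilde{x}}_j$. Orthonormality of $\{\vec{\gamma}_i\}$ and $\{\vec{\tilde{\gamma}}_i\}$ gives $\|\vec{x}_j\|^2=\sum_{i\in I}p_{ij}^2+\sum_{i\in I'}p_{ij}^2$ and $\|\vec{\tilde{x}}_j\|^2=\sum_{i\in I}\tilde{p}_{ij}^2$. Substituting both eigen-expansions into the cross term yields $\vec{x}_j^{T}\vec{\tilde{x}}_j=\sum_{i=1}^m\sum_{h\in I}p_{ij}\tilde{p}_{hj}\,\vec{\gamma}_i^{T}\vec{\tilde{\gamma}}_h$, and I would split the outer index range $\{1,\dots,m\}=I\cup I'$ into three disjoint groups: (i) $i=h\in I$, contributing $\sum_{i\in I}p_{ij}\tilde{p}_{ij}\,\vec{\gamma}_i^{T}\vec{\tilde{\gamma}}_i$; (ii) $i,h\in I$ with $i\neq h$, contributing $\sum_{l\in I}\sum_{h\in I,\,h\neq l}p_{lj}\tilde{p}_{hj}\,\vec{\gamma}_l^{T}\vec{\tilde{\gamma}}_h$; and (iii) $i\in I'$, $h\in I$, contributing $\sum_{l\in I}\sum_{h\in I'}\tilde{p}_{lj}p_{hj}\,\vec{\tilde{\gamma}}_l^{T}\vec{\gamma}_h$ after relabeling and using that each inner product is a scalar. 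Multiplying by $-2$ and recombining: the three pieces $\sum_{i\in I}p_{ij}^2$, $\sum_{i\in I}\tilde{p}_{ij}^2$ and $-2\sum_{i\in I}p_{ij}\tilde{p}_{ij}\vec{\gamma}_i^{T}\vec{\tilde{\gamma}}_i$ merge into $\sum_{i\in I}\bigl(p_{ij}^2+\tilde{p}_{ij}^2-2p_{ij}\tilde{p}_{ij}\vec{\gamma}_i^{T}\vec{\tilde{\gamma}}_i\bigr)$, the term $\sum_{i\in I'}p_{ij}^2$ stands by itself, and groups (ii) and (iii) remain with coefficient $-2$; summing over $j$ reproduces the stated identity verbatim.

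I do not expect any genuine analytic difficulty: the argument is one bilinear expansion plus careful index bookkeeping. The only step that needs real care is the partition of $\sum_{i=1}^m\sum_{h\in I}$ into the three disjoint blocks above --- one must check that every pair $(i,h)$ with $i\in\{1,\dots,m\}$, $h\in I$ is counted exactly once, and that the symmetric contribution $\vec{\tilde{x}}_j^{T}\vec{x}_j$ is folded back in with the correct factor of $2$, so the coefficients in front of the two ``mixed'' cross sums come out as $-2$ rather than $-1$ or $-4$. The only place where something beyond pure algebra is used is the preliminary observation that $\vec{\tilde{x}}_j$ has no component along $\vec{\tilde{\gamma}}_i$ for $i\in I'$, which is precisely where Lemma~1 enters; without it the reduction of $\vec{\tilde{x}}_j$ to a sum over $I$ would not follow from the mere statement of the diagonalization.
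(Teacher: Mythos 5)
Your proof is correct and follows essentially the same route as the paper's: both are the same bilinear expansion of $\sum_j\|\vec{x}_j-\vec{\tilde{x}}_j\|^2$ in the two orthonormal bases followed by partitioning the index pairs over $I$ and $I'$ (the paper groups the vectors before squaring, you expand $\|a\|^2+\|b\|^2-2a^{T}b$ and split the cross term, which is the same computation). Your explicit justification via Lemma~1 that $\tilde{p}_{ij}=0$ for $i\in I'$ is a small but welcome addition that the paper leaves implicit in its setup.
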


\begin{proof}
$\sum_{j=1}^{n}{||\vec{x}_{j}-\vec{\tilde{x}}_{j}||^2}=\sum_{j=1}^{n}{(\vec{x}_{j}-\vec{\tilde{x}}_{j})^T(\vec{x}_{j}-\vec{\tilde{x}}_{j})}$
$=\\\sum_{j=1}^{n}{(\sum_{i=1}^m{p_{ij}\vec{\gamma}_{i}}-\sum_{i\in I}{\tilde{p}_{ij}\vec{\tilde{\gamma}}_{i}})^T(\sum_{i=1}^m{p_{ij}\vec{\gamma}_{i}}-\sum_{i\in I}{\tilde{p}_{ij}\vec{\tilde{\gamma}}_{i}})}=\\
\sum_{j=1}^n{(\sum_{i\in I^{'}} p_{ij} \vec{\gamma}_{i}
 +\sum_{i\in I}({p_{ij}\vec{\gamma}_i-\tilde{p}_{ij}\vec{\tilde{\gamma}}_{i}}))^T}(\sum_{i\in I^{'}} p_{ij} \vec{\gamma}_{i}
 +\sum_{i\in I}{(p_{ij}\vec{\gamma}_i-\tilde{p}_{ij}\vec{\tilde{\gamma}}_{i})})=\\
 \sum_{j=1}^n{(\sum_{i\in I^{'}} p_{ij}^2+\sum_{i\in I}{(p_{ij}\vec{\gamma}_i-\tilde{p}_{ij}{\vec{\tilde{\gamma}}_i})^T\sum_{i\in I}(p_{ij}\vec{\gamma}_i-\tilde{p}_{ij}{\vec{\tilde{\gamma}}_i})})}\\
 +\sum_{j=1}^n{\sum_{i\in I}{(p_{ij}\vec{\gamma}_i-\tilde{p}_{ij}{\vec{\tilde{\gamma}}_i})^T}\sum_{i\in I^{'}} p_{ij} \vec{\gamma}_{i} }+\sum_{j=1}^n{\sum_{i\in I^{'}} p_{ij} {\vec{\gamma}_{i}}^T \sum_{i\in I}{(p_{ij}\vec{\gamma}_i-\tilde{p}_{ij}\vec{\tilde{\gamma}}_{i})    }}\\=\sum_{i\in I^{'}}{\sum_{j=1}^n}{p_{ij}^2}
 +\sum_{i\in I}{\sum_{j=1}^n}{(p_{ij}^2+\tilde{p}_{ij}^2-2p_{ij}\tilde{p}_{ij}{{\vec{\gamma}}_i}^T
 {\vec{\tilde{\gamma}}}_i)}-\\2\sum_{j=1}^n{\sum_{l\in I,l\neq h}\sum_{h\in I}{p_{lj}{\tilde{p}}_{hj}{\gamma_l}^T {\tilde{\gamma}}_h}}-2{\sum_{j=1}^n}{\sum_{l\in I}\sum_{h\in I^{'}}{{\tilde{p}}_{lj}p_{hj}{{\tilde{\gamma}}_l}^T {\gamma}_h}}$
\end{proof}

Then the following theorem gives the global optimal solution for approximating $X$ with $\tilde{X}$.
\begin{theorem}
The global optimal approximation $\tilde{X}$ of $X$ is attained if and only if $I=\{1,2,\dots,k\}$, $p_{ij}=\tilde{p}_{ij}$ and $\vec{\gamma}_i=\vec{\tilde{\gamma}}_i$,
$\forall i \in I, \ \forall j \in \{1, 2, \dots, n\}$.
\end{theorem}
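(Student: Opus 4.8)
The plan is to read the quantity in Theorem~1 as $\|X-\tilde X\|_F^2$ and to minimise it in two nested stages: first over all $\tilde X$ whose columns lie in a fixed subspace $V=\mathrm{span}\{\vec{\tilde\gamma}_i:i\in I\}$ of dimension $\le k$, and then over $V$ itself. For the inner problem I would write $X-\tilde X=(I-P_V)X+(P_VX-\tilde X)$, where $P_V$ is the orthogonal projector onto $V$; the columns of $(I-P_V)X$ lie in $V^{\perp}$ while those of $P_VX-\tilde X$ lie in $V$, so the two parts are Frobenius-orthogonal and
\[
\sum_{j=1}^n\|\vec x_j-\vec{\tilde x}_j\|^2=\|(I-P_V)X\|_F^2+\|P_VX-\tilde X\|_F^2 .
\]
(The same splitting can be extracted directly from the expression in Theorem~1 by completing the square in the coefficients $\tilde p_{ij}$.) Hence, for fixed $V$, the unique minimiser is $\tilde X=P_VX$, that is $\vec{\tilde x}_j=\sum_{i\in I}p_{ij}\vec\gamma_i$ when $V=\mathrm{span}\{\vec\gamma_i:i\in I\}$, since the residual $\|P_VX-\tilde X\|_F^2$ vanishes only there.

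For the outer problem I would invoke Lemma~1 to get $\|X\|_F^2=\sum_{i=1}^m\lambda_i$ and, for $\tilde X=P_VX$,
\[
\|(I-P_V)X\|_F^2=\|X\|_F^2-\mathrm{tr}(P_V\mathcal X)=\sum_{l=1}^m\lambda_l-\sum_{l=1}^m\lambda_l\,\|P_V\vec\gamma_l\|^2 .
\]
Setting $c_l:=\|P_V\vec\gamma_l\|^2$ one has $0\le c_l\le 1$ and $\sum_{l=1}^m c_l=\mathrm{tr}(P_V)=\dim V\le k$, so, since the $\lambda_l$ are nonincreasing, $\sum_l\lambda_l c_l\le\sum_{l=1}^k\lambda_l$ and therefore $\sum_j\|\vec x_j-\vec{\tilde x}_j\|^2\ge\sum_{l=k+1}^m\lambda_l$. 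Assuming the spectral gap $\lambda_1>\dots>\lambda_k>\lambda_{k+1}$, equality forces $c_1=\dots=c_k=1$, hence $\vec\gamma_1,\dots,\vec\gamma_k\in V$, i.e. $V=\mathrm{span}\{\vec\gamma_1,\dots,\vec\gamma_k\}$, equivalently $I=\{1,\dots,k\}$. Combined with the inner step this gives $\tilde X=\sum_{i\le k}\vec\gamma_i\vec\gamma_i^{T}X$, so $\vec{\tilde x}_j=\sum_{i\le k}p_{ij}\vec\gamma_i$; and since $\tilde X\tilde X^{T}=\sum_{i\le k}\lambda_i\vec\gamma_i\vec\gamma_i^{T}$ (again by Lemma~1, $\vec\gamma_i^{T}\mathcal X\vec\gamma_{i'}=\lambda_i\delta_{ii'}$), the distinctness of $\lambda_1,\dots,\lambda_k$ identifies $\vec{\tilde\gamma}_i=\vec\gamma_i$ and then $\tilde p_{ij}=\vec{\tilde\gamma}_i^{T}\vec{\tilde x}_j=p_{ij}$, which is exactly the claimed characterisation.

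Two points need care. First, the exact ``if and only if'' relies on the separation $\lambda_1>\dots>\lambda_k>\lambda_{k+1}$: with repeated eigenvalues the optimum is attained on a whole family of subspaces, and the individual $\vec{\tilde\gamma}_i$ are then determined only up to rotations inside the corresponding eigenspaces, so the statement must be read modulo this non-uniqueness. Second --- and this is where I expect the real work to lie --- the argument above minimises over \emph{all} matrices of rank $\le k$, whereas the admissible set for semi-NMF is $\{WH:W\in\mathbb R^{m\times k},\,H\ge 0\}$; one must still verify that the optimal $\tilde X=\sum_{i\le k}\vec\gamma_i\vec\gamma_i^{T}X$ admits a factorisation $WH$ with $H\ge 0$. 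Equivalently, the columns of $\tilde X$ must lie in a simplicial cone inside $V$, which I would establish through the explicit construction of Section~3 and which holds in particular in the rank-$1$ and rank-$2$ nonnegative cases via the Perron--Frobenius property of $\mathcal X$; pinning down the precise hypothesis under which this realizability is guaranteed is the main obstacle, the Eckart--Young part being routine once Theorem~1 is available.
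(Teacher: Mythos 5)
Your proposal is correct, and it is a genuinely different — and substantially more rigorous — argument than the one in the paper. The paper works directly with the coordinate expansion of Theorem~1: it restricts attention to the case where $\{\vec{\tilde\gamma}_i\}$ is a small rotation of $\{\vec\gamma_i\}$, assumes $p_{ij}$ and $\tilde p_{ij}$ share signs so that the diagonal block is bounded below by $\sum_{i\in I}\sum_j(p_{ij}-\tilde p_{ij})^2\ge 0$, and then argues that the cross terms vanish essentially only at the claimed optimum; this is a local, heuristic argument that does not actually certify optimality over all rank-$\le k$ candidates. Your route — splitting $X-\tilde X=(I-P_V)X+(P_VX-\tilde X)$ into Frobenius-orthogonal pieces, solving the inner problem by $\tilde X=P_VX$, and bounding $\mathrm{tr}(P_V\mathcal X)=\sum_l\lambda_l\|P_V\vec\gamma_l\|^2\le\sum_{l\le k}\lambda_l$ via the constraints $0\le c_l\le 1$, $\sum_l c_l=\dim V\le k$ — is the standard Eckart--Young/Ky Fan argument; it is global, needs no small-angle or sign assumptions, and disposes of the paper's troublesome cross terms automatically through the Pythagorean identity. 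Your two caveats are both substantive: the ``only if'' direction genuinely requires $\lambda_k>\lambda_{k+1}$ (and distinct $\lambda_1,\dots,\lambda_k$ to pin down the individual $\vec{\tilde\gamma}_i$, and even then only up to sign), a hypothesis the paper's statement silently omits; and the theorem as used later in the paper indeed optimizes over all rank-$\le k$ matrices rather than over $\{WH:H\ge 0\}$, with realizability of the optimum inside the semi-NMF feasible set deferred to Theorem~3 and the Section~3 construction rather than established here. In short, your proof closes gaps that the paper's own proof leaves open.
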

\begin{proof}
Only the case where the $\{\vec{\gamma}_i\}_{i=1}^m$ coordinate system and the $\{\vec{\tilde{\gamma}}_i\}_{i=1}^m$ coordinate system
differ by a small rotation angle is considered
(since we are minimizing the error between $X$ and $\tilde{X}$,
the situation of interest is when this error is small;
therefore, assuming that the two differ only by a rotation transformation of a small angle is reasonable). Therefore, it can be assumed that $p_{ij} $ and $\tilde{ p}_{ij}$ have the same sign,$\forall i \in \{1,2,\ldots, m\}, \ \forall j \in \{1, 2, \dots, n\}$, then $\sum_{i\in I}{\sum_{j=1}^n}{(p_{ij}^2+\tilde{p}_{ij}^2-2p_{ij}\tilde{p}_{ij}{{\vec{\gamma}}_i}^T
 {\vec{\tilde{\gamma}}}_i)}\\\geq\sum_{i\in I}{\sum_{j=1}^n}{(p_{ij}^2+\tilde{p}_{ij}^2-2p_{ij}\tilde{p}_{ij})}=\sum_{i\in I}{\sum_{j=1}^n}{(p_{ij}-\tilde{p}_{ij})^2}\geq 0$, and iff $\vec{\gamma}_i=\vec{\tilde{\gamma}}_i$, \\together with $p_{ij}=\tilde{p}_{ij}, \forall i \in I,\ \forall j \in \{1, 2, \dots, n\}$, the equal sign satisfies.  From Theorem~2 together with the above conclusion, it follows that $\sum_{j=1}^{n}{||\vec{x}_{j}-\vec{\tilde{x}}_{j}||^2}\geq \sum_{i\in I^{'}}{\sum_{j=1}^n}{p_{ij}^2}-\\2\sum_{j=1}^n{\sum_{l\in I,l\neq h}\sum_{h\in I}{p_{lj}{\tilde{p}}_{hj}{\gamma_l}^T {\tilde{\gamma}}_h}}-2{\sum_{j=1}^n}{\sum_{l\in I}\sum_{h\in I^{'}}{{\tilde{p}}_{lj}p_{hj}{{\tilde{\gamma}}_l}^T {\gamma}_h}}$. And when
 $\vec{\gamma}_i=\vec{\tilde{\gamma}}_i$, together with $p_{ij}=\tilde{p}_{ij}, \forall i \in I,\ \forall j \in \{1, 2, \dots, n\}$, \\$\sum_{j=1}^n{\sum_{l\in I,l\neq h}\sum_{h\in I}{p_{lj}{\tilde{p}}_{hj}{\gamma_l}^T {\tilde{\gamma}}_h}}=0$. Again for $\forall i \in I^{'}$,$\{\vec{\gamma}_i\}$ and
 $\{\vec{\tilde{\gamma}}_i\}$ still exists a rotational deviation of a small angle, however, $\{\vec{\gamma}_h\},\forall h \in I^{'}$ and $\{\vec{\tilde{\gamma}}_l\},\forall l \in I$, must still be orthogonal. So $\sum_{j=1}^{n}{||\vec{x}_{j}-\vec{\tilde{x}}_{j}||^2}\geq \sum_{i\in I^{'}}{\sum_{j=1}^n}{p_{ij}^2}=\sum_{i\in I^{'}}\lambda_i\geq\sum_{i=k+1}^{m}\lambda_i$. We arrive at the conclusion:
 $I=\{1,2,\dots,k\}$ and $p_{ij}=\tilde{p}_{ij}$ and $\vec{\gamma}_i=\vec{\tilde{\gamma}}_i$,
$\forall i \in I, \ \forall j \in \{1, 2, \dots, n\}$.
Conversely, if $I = \{1, 2, \dots, k\}$ and $\vec{\gamma}_i=\vec{\tilde{\gamma}}_i$, $p_{ij} = \tilde{ p}_{ij}$,$\forall i \in I, \ \forall j \in \{1, 2, \dots, n\}$,
so $I^{'}=\{k+1,k+2,\dots,m\}$ and then the error $\sum_{j=1}^{n}{||\vec{x}_{j}-\vec{\tilde{x}}_{j}||^2}=\lambda_{k+1} + \lambda_{k+2} + \dots + \lambda_{m}$ attains its minimum because $\lambda_1>= \lambda_2>= \ldots>= \lambda_m$.
\end{proof}

Here is the globally optimal low-rank factorization for any data matrix, where neither the basis matrix nor the coefficient matrix is constrained to be nonnegative. $\mathcal{H}^TX$$=
\begin{pmatrix}
Y_k \\
Y_{m-k}
\end{pmatrix}
\approx
\begin{pmatrix}
Y_k \\
0
\end{pmatrix}$, where \( Y_k \) is a \( k \times n \) matrix, \( Y_{m-k} \) is a \( (m-k) \times n \) matrix.
Then $X=\mathcal{H}\mathcal{H}^TX\approx \mathcal{H} \begin{pmatrix}
Y_k \\
0
\end{pmatrix}
\triangleq \tilde{X}_{\mathbf{p}}
\triangleq
\begin{pmatrix}
\tilde{X}_{m-k}\\
\tilde{X}_{k}
\end{pmatrix}
$, where $\tilde{X}_{\mathbf{p}}$ is reconstructed after projection-based zeroing, and viewed as the global optimal approximation of $X$,  $\tilde{X}_{m-k}$ and $\tilde{X}_{k}$ are its partitioned blocks. Let $\tilde{X}_{m-k}=A\tilde{X}_{k}$, then $A=(\tilde{X}_{m-k}{\tilde{X}_{k}}^T)(\tilde{X}_{k}{\tilde{X}_{k}}^T)^{-1}$.
The equation of the hyperplane fitted by the column vectors of $\tilde{X}_{\mathbf{p}}$ is given by:
$
\begin{pmatrix}
E \;\; -A
\end{pmatrix}
\begin{pmatrix}
\tilde{X}_{m-k}\\
\tilde{X}_{k}
\end{pmatrix}=0
$, and $\begin{pmatrix}
A\\
E
\end{pmatrix}$ satisfies this equation, where the former \( E \) is the \( (m - k) \times (m - k) \) identity matrix, and the latter \( E \) is the \( k \times k \) identity matrix. Finally, we obtain an unconstrained and globally optimal matrix factorization:
\[
X\approx\tilde{X}_{\mathbf{p}} = \begin{pmatrix}
A\\
E
\end{pmatrix}C,
\]
where the basis matrix \( \begin{pmatrix}
A\\
E
\end{pmatrix} \in \mathbb{R}^{m \times k} \), the coefficient matrix \( C \in \mathbb{R}^{k \times n} \) and $
C=(E+A^TA)^{-1}\begin{pmatrix}
A^T\;\;E
\end{pmatrix}\tilde{X}_{\mathbf{p}}$
, without any nonnegativity constraints.

 We then consider the factorization under the semi-NMF setting, which imposes nonnegativity on the coefficient matrix, along with an algorithm for generating the corresponding basis matrix.

Let $\tilde{X}_{\mathbf{p}}$ be as defined above, and let $X$ be a nonnegative data matrix. When $rank(\tilde{X}_{\mathbf{p}})\\\in\{1,2\}$,
$X$ admits a nonnegative matrix factorization.
\begin{theorem}
If both $X$ and $\tilde{X}_{\mathbf{p}}$ are nonnegative, and $rank(\tilde{X}_{\mathbf{p}})=1 or 2$, then $X$ admits a nonnegative factorization.
\end{theorem}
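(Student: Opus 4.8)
The plan is to reduce the claim to the purely linear-algebraic fact that a nonnegative matrix of rank at most $2$ has nonnegative rank equal to its ordinary rank, and then read off the consequence for $X$. Concretely, I would set $M:=\tilde X_{\mathbf p}\ge 0$ and $r:=\mathrm{rank}(M)\in\{1,2\}$, produce a factorization $M=WH$ with $W\in\mathbb{R}^{m\times r}_{\ge 0}$ and $H\in\mathbb{R}^{r\times n}_{\ge 0}$, and (if $r<k$) pad $W$ with $k-r$ zero columns and $H$ with $k-r$ zero rows. Since, by Theorem~3, $\tilde X_{\mathbf p}$ already attains the global minimum $\sum_{i=k+1}^{m}\lambda_i$ of $\|X-WH\|_F^2$ over all rank-$k$ matrices, such nonnegative $W,H$ would be simultaneously a global optimum for NMF and for semi-NMF of $X$; this is the intended meaning of ``$X$ admits a nonnegative factorization'' here. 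Note that the particular basis $\binom{A}{E}$ built earlier need not be nonnegative, so I would discard it and construct a nonnegative basis directly from the column geometry of $M$ (zero columns of $M$ are harmless — they become zero columns of $H$).

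\smallskip
\noindent\textbf{Rank $1$.}
Here $M=uv^{T}$ with $u,v\ne 0$. I would pick $(i,j)$ with $M_{ij}\ne 0$, replace $(u,v)$ by $(-u,-v)$ if necessary so that $u_i>0$ and $v_j>0$, and then use $u_i v_{j'}=M_{ij'}\ge 0$ for all $j'$ and $u_{i'}v_j=M_{i'j}\ge 0$ for all $i'$ to conclude $v\ge 0$ and $u\ge 0$. Thus $W=u$, $H=v^{T}$ is a nonnegative rank-$1$ factorization. This case is routine.

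\smallskip
\noindent\textbf{Rank $2$.}
I would consider the finitely generated cone $K:=\mathrm{cone}\{\vec{m}_1,\dots,\vec{m}_n\}\subseteq\mathbb{R}^{m}_{\ge 0}$ generated by the columns of $M$. It is pointed (being contained in the pointed orthant $\mathbb{R}^m_{\ge 0}$) and $2$-dimensional (since $\mathrm{span}(K)$ is the column space of $M$). The functional $\phi(x)=\sum_i x_i$ is strictly positive on $K\setminus\{0\}$, so the slice $K\cap\{\phi=1\}$ is a bounded $1$-dimensional polytope, hence a segment with two endpoints $\bar w_1,\bar w_2$; taking $w_1,w_2$ to be suitable positive multiples of these yields two linearly independent nonnegative vectors that generate $K$. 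Every column then satisfies $\vec{m}_j=h_{1j}w_1+h_{2j}w_2$ with $h_{1j},h_{2j}\ge 0$, and $W=(w_1\ \ w_2)\ge 0$, $H=(h_{ij})\ge 0$ is the desired nonnegative rank-$2$ factorization $M=WH$.

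\smallskip
\noindent\textbf{Main obstacle.}
I expect the delicate point to be the rank-$2$ argument: one must argue that the column cone $K$, though living in the high-dimensional space $\mathbb{R}^m$, is genuinely a two-generator planar ``wedge'', which requires combining pointedness (inherited from $\mathbb{R}^m_{\ge 0}$) with two-dimensionality of the column space, and carefully disposing of degenerate sub-cases (the whole matrix zero — excluded since $r\ge 1$; all columns positively proportional — then $r=1$; stray zero columns — absorbed into $H$). There is no analogous finite-generation/pointedness shortcut for $r\ge 3$, which is exactly why the reduction to NMF is asserted only for $k\in\{1,2\}$.
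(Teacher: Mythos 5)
Your proposal is correct and follows essentially the same route as the paper: in both the rank-$1$ and rank-$2$ cases the key fact is that the columns of the nonnegative matrix $\tilde{X}_{\mathbf{p}}$ lie in a pointed cone of dimension $1$ or $2$, whose (at most two) extreme rays supply a nonnegative basis with nonnegative coefficients --- the paper identifies these extreme rays as the maximal-angle column pair, while you obtain them as the endpoints of a cross-section of the column cone. Your write-up is in fact more complete than the paper's, which merely asserts that all columns lie in the sector spanned by the maximal-angle pair; your pointedness/cross-section argument is exactly the missing justification of that assertion, and you also handle the degenerate details (zero columns, padding when $r<k$, the sign-flip in the rank-$1$ case) that the paper passes over.
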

\begin{proof}
1. $rank(\tilde{X}_{\mathbf{p}})=1$, let $k=1$, and $\vec{\tilde{x}}_0$  be an arbitrary nonzero column vector of $\tilde{X}_{\mathbf{p}}$.
Then the nonnegative matrix factorization is $X\approx \tilde{X}_{\mathbf{p}}=
\vec{\tilde{x}}_0(\frac{||\vec{\tilde{x}}_1||}{||\vec{\tilde{x}}_0||} \; \frac{||\vec{\tilde{x}}_2||}{||\vec{\tilde{x}}_0||}\; \dots  \frac{||\vec{\tilde{x}}_n||}{||\vec{\tilde{x}}_0||})$;
2. $rank(\tilde{X}_{\mathbf{p}})=2$, let $k=2$, traverse all pairs of column vectors of $\tilde{X}_{\mathbf{p}}$, and select the pair $ (\vec{\tilde{x}}_{(1)},\vec{\tilde{x}}_{(2)}) $with the maximal angle between them. All points lie on the plane defined by the three points \( O, \vec{\tilde{x}}_{(1)}, \vec{\tilde{x}}_{(2)} \), and are within the sector formed by the directed rays
\( \overrightarrow{O{\tilde{x}_{(1)}}} \) and \( \overrightarrow{O{\tilde{x}_{(2)}}} \). Then $X\approx \tilde{X}_{\mathbf{p}}=
(\vec{\tilde{x}}_{(1)}\;\;\vec{\tilde{x}}_{(2)})C_{2\times n}$, and $C_{2\times n}$ nonnegative.
\end{proof}

Regardless of whether \( X \) is nonnegative or not, when the rank of \( \tilde{X}_{\mathbf{p}} \) is greater than 2, there exists a semi-NMF decomposition with a nonnegative coefficient matrix, and the algorithm for generating the basis matrix is as follows: we need to select \( k \) point vectors from the column space of \( \tilde{X}_{\mathbf{p}}\) such that they form a basis matrix. The key lies in choosing these \( k \) points so that, together with the origin, they span a high-dimensional \((k+1)\)-polytope that contains all points. The generation algorithm for these \( k \) points is described in section 3. Naturally, in the degenerate cases where the rank of $\tilde{X}_{\mathbf{p}}$ is $1$ or $2$, the algorithm directly recovers the conclusion of the above theorem (which reduces to an NMF decomposition when the input matrix is nonnegative). In more general settings, the algorithm produces a basis matrix that ensures the representation coefficients remain nonnegative.

\section{Algorithm}
\label{sec:alg}
Our analysis leads to the algorithm in the next section. The uniqueness of the basis matrix (and thus the uniqueness of the representation matrix) can be guaranteed if, at each step of rank reduction via oblique projection, the pair of column vectors in $X^{(t)}$ with the maximal distance is unique. Initially, the origin can be included as one of the column vectors in $\tilde{X}_{\mathbf{p}}$ for nonnegative inputting. Once the algorithm outputs the basis matrix $W$, the matrix $\tilde{X}_{\mathbf{p}}$ can be exactly decomposed (or approximated globally optimally for $X$) as $X \approx \tilde{X}_{\mathbf{p}} = WH$, where
\[
H = \left(W^\top W\right)^{-1} W^\top \tilde{X}_{\mathbf{p}}.
\]

\begin{algorithm}
\caption{generating the basis matrix}
\label{alg:buildtree}
\begin{algorithmic}
\STATE \textbf{Input:} A data matrix $X$, and its projection-based zeroing reconstruction $\tilde{X}_{\mathbf{p}} \in \mathbb{R}^{m \times n}$ denoted as $X^{(0)}$, and $\operatorname{rank}(X^{(0)}) = k$
\STATE \textbf{Initialize:}

\IF{$X$ is nonnegative}
\STATE Set $v_0 = 0 \in \mathbb{R}^m$, let $V = \{v_0\}$
\STATE Select the column of $X^{(0)}$ with the largest norm, denote it by $v_{temp}$
\ELSE
\STATE Find the pair of columns in $X^{(0)}$ with the maximum distance
\STATE Let the point with smaller norm be $v_{temp1}$, the larger be $v_{temp2}$
\STATE Let $v_0 = v_{temp1}-0.01f\times(v_{temp2}-v_{temp1})$, and $v_{temp}=v_{temp2}$
\ENDIF

\IF{$k = 1$}
    \STATE let $v_1=v_{temp}$, set $V \gets V \cup \{v_1\}$; \textbf{Output:} $W = v_1 - v_0$; \textbf{terminate}
\ENDIF

\STATE Construct a hyperplane through $v_{temp}$ orthogonal to $v_{temp} - v_0$
\STATE For each column $\vec{\tilde{x}}_i$ of $X^{(0)}$, project the ray from $v_0$ through $\vec{\tilde{x}}_i$ onto the hyperplane
\STATE Replace $X^{(0)}$ with the new projected set, denoted by $X^{(1)}$; note that at this point, the rank of $X^{(1)}$ is $k - 1$

\STATE Set $t \gets 1$

\WHILE{$\operatorname{rank}(X^{(t)}) > 1$}
    \STATE Find the pair of columns in $X^{(t)}$ with the maximum distance
    \STATE Let the point with smaller norm be $v_t$, the larger be $v_{temp}$; update $V \gets V \cup \{v_t\}$
    \STATE Construct a hyperplane through $v_{temp}$ orthogonal to $v_{temp} - v_t$

    \FORALL{remaining columns ${\vec{x}^{(t)}}_i$ (excluding $v_t$) in $X^{(t)}$}
        \STATE Draw the line from $v_t$ to ${\vec{x}^{(t)}}_i$  and compute its intersection with the hyperplane
    \ENDFOR

    \STATE Let new $X^{(t+1)}$ be the set of intersection points (having one fewer column than $X^{(t)}$ )
    \STATE Update $t \gets t + 1$, note that at this point, the rank of $X^{(t)}$ is $k - t$
\ENDWHILE

\STATE  Final step: find the pair of columns in $X^{(t)}$ with the maximum distance; assign the two points with smaller norm to $v_t$, larger to $v_{t+1}$; update $V$
\STATE \textbf{Output:} Basis matrix $W = \begin{bmatrix} v_1 - v_0 & v_2 - v_0 & \cdots & v_k - v_0 \end{bmatrix}$
\end{algorithmic}
\end{algorithm}

\section{Experiments }
\label{sec:experiments}
We compare our result to standard NMF and iterative semi-NMF algorithms in terms of reconstruction error. Notably, our method achieves lower error than standard NMF even on fully nonnegative inputs, and performs comparably or better than iterative semi-NMF algorithms across various datasets. Detailed evidence and experimental validations are provided in the subsequent table.

Table 1 below presents dimensionality reduction and reconstruction experiments conducted on two types of non-negative data: randomly generated synthetic data (4-dimensional, 10 samples) and the UCI Wine dataset (178 samples, 13 dimensions). Three algorithms were applied—standard NMF, the traditional numerically iterative semi-NMF, and our globally optimal semi-NMF—to compare their reconstruction errors. For the synthetic data (4D), the target dimensionalities are 1, 2, and 3. For the UCI Wine data, target dimensions range from 1 to 7. The entry “S:1” in the second row of the first column indicates synthetic data with a target dimensionality of 1, while “Wine:1” in the fifth row corresponds to the UCI Wine dataset with a target dimensionality of 1. The remaining entries follow this pattern accordingly.

\begin{table}[htbp]
\centering
\caption{Comparison of Reconstruction Errors Across Algorithms}
\label{tab:recon_errors}
\begin{tabular}{|c|c|c|c|}
\hline
\backslashbox{Data}{Algorithm} & Standard NMF & Iterative semi-NMF & Our semi-NMF \\
\hline
S:1 &10.610657  &10.610657  & 10.610657 \\
S:2 &4.859226  &4.665386  &  4.653841\\
S:3 &2.206831  &4.748131  &  2.167300\\
Wine:1 &498.520935  &498.520935  &498.520935  \\
Wine:2 &70.140053  &172.181091  &70.140083  \\
Wine:3 &40.666626  &73.077126  &40.664982  \\
Wine:4 &28.695702  &70.092888  &27.342697  \\
Wine:5 &21.401932  &69.924767  &20.094454  \\
Wine:6 &14.900063  &69.265709  &13.950203  \\
Wine:7 &9.568855  &69.638557  &8.533113  \\
\hline
\end{tabular}
\end{table}

From the reconstruction errors in Table 1, it is evident that our globally optimal algorithm not only outperforms the traditional numerically iterative semi-NMF, but also preserves the original data more accurately than the standard NMF.

Next, we examine the distribution of the representation coefficients produced by our algorithm on the UCI Wine dataset. When the target dimensionality is set to 2 or 3, we visualize the coefficient vectors in a 2D plane and a 3D space, respectively. These visualizations not only confirm the non-negativity of the representation coefficients, but also reveal a possible method for visualizing high-dimensional data, offering the most faithful (or realistic) representation.

Figure 1 and 2 show the distributions of the representation vectors after reducing the data to 2 and 3 dimensions, respectively. The original UCI Wine dataset has 13 dimensions and is divided into three classes. After dimensionality reduction to 2D and 3D, the representation coefficients are visualized using different colors and symbols, which provides an intuitive and appropriate depiction of the relationships among the data types.

\begin{figure}[htbp]
  \centering
  \includegraphics{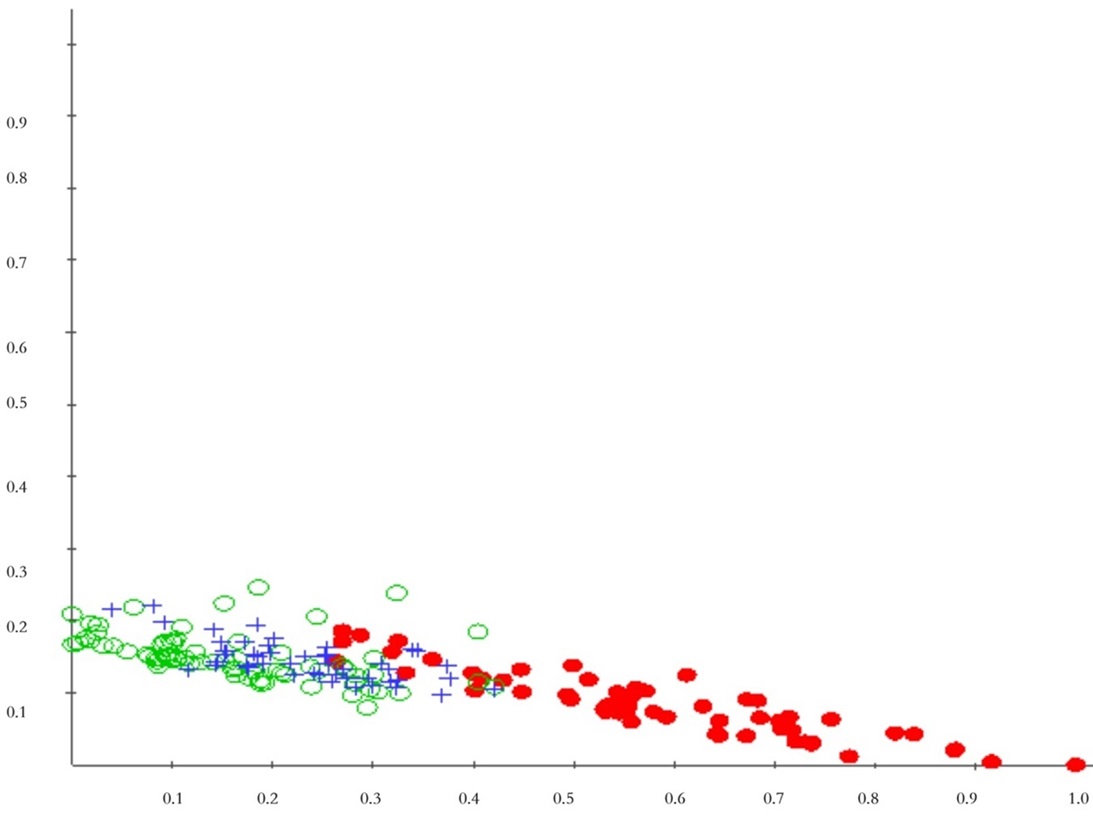}
  \caption{UCI Wine reduced to 2D.}
  \label{fig:testfig}
\end{figure}

\begin{figure}[htbp]
  \centering
  \includegraphics{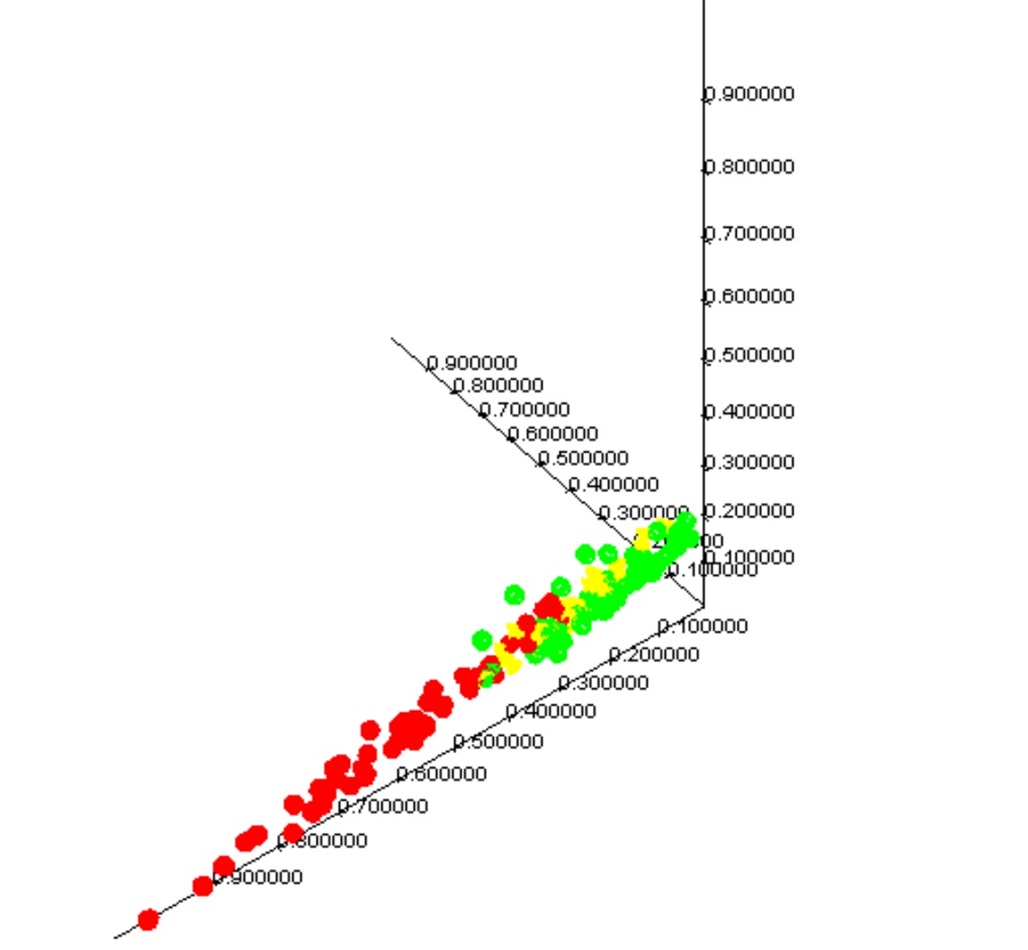}
  \caption{UCI Wine reduced to 3D.}
  \label{fig:testfig2}
\end{figure}





\section{Conclusion}
\label{sec:conclusions}
In this work, we proposed a novel globally optimal approach to semi-NMF by leveraging the orthogonal diagonalization of the scatter matrix of the original data. Our method performs a projection-based zeroing followed by reconstruction, yielding a best approximation of the data in a global sense. Unlike traditional iterative semi-NMF algorithms that may converge to local minima, our approach guarantees a globally optimal factorization under well-defined conditions. Furthermore, the resulting representation matrix is nonnegative, and the reconstruction error is shown to be lower than that of both the standard NMF and conventional semi-NMF methods, particularly for nonnegative input data. For low target ranks (e.g., 1 or 2), our method even recovers fully nonnegative factorizations, aligning with standard NMF outputs. In addition, under the condition that each step in the dimension reduction process preserves the uniqueness of the maximally distant column pairs, we provide a partial guarantee of uniqueness for the resulting matrix factorization.

Future work may explore the robustness of the proposed method in the presence of noise or missing data, as well as its potential integration with large-scale machine learning frameworks. In particular, the deterministic nature and geometric interpretability of the method make it a promising candidate for interpretable representation learning, clustering, and data visualization in high-dimensional settings.

\bibliographystyle{plain}
\bibliography{references} 

\end{document}